\def\gA{{\mathcal{A}}}
\def\gO{{\mathcal{O}}}
\def\gS{{\mathcal{S}}}
\def\gT{{\mathcal{T}}}
\newcommand{\ie}{\textit{i.e.}}
\newcommand{\eg}{\textit{e.g.}}
\newcommand{\ttt}{\texttt}
\definecolor{lightgrey}{rgb}{0.95, 0.95, 0.95}
\definecolor{grey}{rgb}{0.4, 0.4, 0.4}
\newcommand{\ours}{{StepAgent}}
\newtheorem{assumption}{Assumption}
\newtcolorbox{cvbox}[1][]{
    enhanced,
    after skip=8mm,
    title=#1,
    breakable = true,
    fonttitle=\sffamily\bfseries\color{white},
    coltitle=white,
    colbacktitle=black!75!white, 
    titlerule= 0pt,         
    overlay={%
        \ifcase\tcbsegmentstate
        \or%
        \else%
        \fi%
    }
    colback = gray!5!white,         
    colframe = black!75     
    }
\begin{document}

\title{From Novice to Expert: LLM Agent Policy Optimization via Step-wise Reinforcement Learning}


\author{Zhirui Deng}
\orcid{0000-0001-8952-7666}
\email{zrdeng@ruc.edu.cn}

\author{Zhicheng Dou}
\authornote{Zhicheng Dou and Ruibin Xiong are the corresponding authors. }
\orcid{0000-0002-9781-948X}
\email{dou@ruc.edu.cn}
\affiliation{
\department{Gaoling School of Artificial Intelligence}
\institution{Renmin University of China}
  \city{Beijing}
  \country{China}
}

\author{Yutao Zhu}
\orcid{0000-0002-9432-3251}
\email{yutaozhu94@gmail.com}

\author{Ji-Rong Wen}
\orcid{0000-0002-9777-9676}
\email{jrwen@ruc.edu.cn}
\affiliation{
\department{Gaoling School of Artificial Intelligence}
\institution{Renmin University of China}
  \city{Beijing}
  \country{China}
}
\author{Ruibin Xiong}
\authornotemark[1]
\email{xiongruibin18@mails.ucas.ac.cn}

\author{Mang Wang}
\email{songmu@baichuan-inc.com}

\author{Weipeng Chen}
\email{chenweipeng@baichuan-inc.com}
\affiliation{
\institution{Baichuan Intelligent Technology}
  \city{Beijing}
  \country{China}
}


\begin{abstract}
The outstanding capabilities of large language models (LLMs) render them a crucial component in various autonomous agent systems. While traditional methods depend on the inherent knowledge of LLMs without fine-tuning, more recent approaches have shifted toward the reinforcement learning strategy to further enhance agents' ability to solve complex interactive tasks with environments and tools. However, previous approaches are constrained by the sparse reward issue, where existing datasets solely provide a final scalar reward for each multi-step reasoning chain, potentially leading to ineffectiveness and inefficiency in policy learning. In this paper, we introduce \ours{}, which utilizes step-wise reward to optimize the agent's reinforcement learning process. Inheriting the spirit of novice-to-expert theory, we first compare the actions of the expert and the agent to automatically generate intermediate rewards for fine-grained optimization. Additionally, we propose implicit-reward and inverse reinforcement learning techniques to facilitate agent reflection and policy adjustment. Further theoretical analysis demonstrates that the action distribution of the agent can converge toward the expert action distribution over multiple training cycles. Experimental results across various datasets indicate that \ours{} outperforms existing baseline methods. 
\end{abstract}

\begin{CCSXML}
<ccs2012>
    <concept>
        <concept_id>10010147.10010178.10010199</concept_id>
        <concept_desc>Computing methodologies~Planning and scheduling</concept_desc>
        <concept_significance>500</concept_significance>
    </concept>
    <concept>
        <concept_id>10010147.10010257.10010258.10010261.10010273</concept_id>
        <concept_desc>Computing methodologies~Inverse reinforcement learning</concept_desc>
        <concept_significance>500</concept_significance>
    </concept>
    <concept>
        <concept_id>10010147.10010257.10010258.10010261</concept_id>
        <concept_desc>Computing methodologies~Reinforcement learning</concept_desc>
        <concept_significance>500</concept_significance>
    </concept>
</ccs2012>
\end{CCSXML}

\ccsdesc[500]{Computing methodologies~Planning and scheduling}
\ccsdesc[500]{Computing methodologies~Reinforcement learning}
\ccsdesc[500]{Computing methodologies~Inverse reinforcement learning}
\keywords{LLM Agent Planning, Reinforcement Learning, Process-Reward Optimization}


\maketitle

\section{Introduction}
Large language models (LLMs) have begun a revolutionary era in artificial general intelligence (AGI), due to their remarkable capabilities in handling complex interactive tasks with environments and tools~\cite{agentsurvey23,agentsurvey24}. The tasks involve multiple areas including web browsing~\cite{mind2web24nips}, web shopping~\cite{WebShop22nips}, house holding~\cite{ALFWorld21iclr}, and complex question answering~\cite{HotpotQA18emnlp,2wikimultihop20coling,musique21tacl}. 
Although these models (\eg, ChatGPT~\cite{openai2024gpt3.5technicalreport} and GPT-4~\cite{openai2024gpt4technicalreport}) are endowed with extensive knowledge during pre-training on a large-scale corpus, they demonstrate a tendency to generate hallucinated content~\cite{hallu20arxiv1,hallu20arxiv2}. 
To tackle this issue and further align with human preferences, researchers have introduced training LLM agents with reinforcement learning (RL) to enhance their ability for complicated task planning and resolving. 


\begin{figure}
    \centering
    \includegraphics[width=0.92\linewidth]{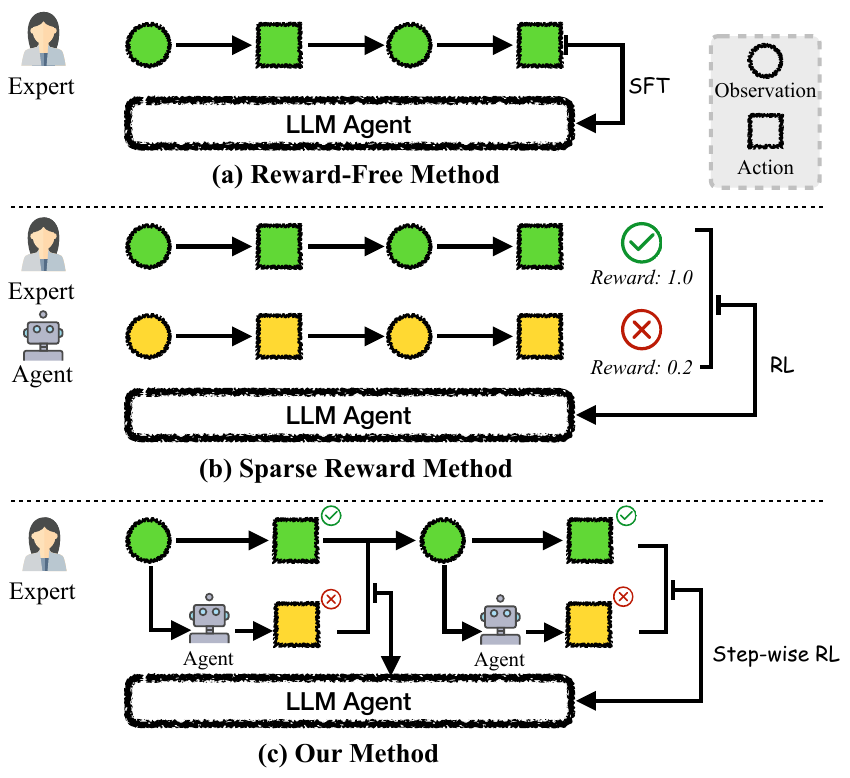}
    \caption{The comparison between our step-wise feedback LLM agent framework and previous approaches.}
    \label{fig:intro_model}
\end{figure}

Initial efforts in developing LLM agents~\cite{SFT22nips,SFT23emnlp,BC91nc,BC08icml} concentrated on maximizing the token-level generation probability of the expert actions, denoted in Figure~\ref{fig:intro_model}(a).
These methods, while straightforward and reward-free, fall short when confronted with the training data shortage situation and struggle to generalize beyond the training data distribution. Recognizing these constraints, researchers~\cite{RLHF17nips,RLHF19arxiv,RLHF20nips,RLHF22arxiv,ETO24acl} have shifted towards leveraging manually annotated preferences or the final environment feedback as additional reward signals and conducting reinforcement learning training on the basis of the supervised fine-tuning (SFT) model. 
Nevertheless, these methods are restricted by the sparsity and delay of the reward signals. 
As shown in Figure~\ref{fig:intro_model}(b), existing reward signals are represented as a single scalar reward for each generated observation-action trajectory. Such sparse feedback renders it challenging for the model to discern the quality of each action, particularly for tasks with long reasoning chain. Consequently, the model struggles to precisely refine low-quality actions, resulting in low learning efficiency. The delayed reward feedback prevents the model from making timely corrections, potentially leading to sub-optimal responses. 

Process-supervised reinforcement learning~\cite{rewardtype22arxiv,rewardtype23arxiv} presents a promising solution to these challenges by providing supervision at each intermediate reasoning step. Throughout the process of agent reasoning, the reward of each intermediate step can assist in identifying underperformed policies timely, allowing for agent capability rapid improvements. 
In light of this, \textbf{we propose to optimize agent policy by incorporating step-wise supervision into reinforcement learning}.
However, directly applying step-wise supervision to LLM agents introduces its own set of challenges. First, value assessments for individual steps are often absent from the current multi-step agent interaction datasets, leaving only a final evaluation. Even for human annotators, fully understanding the contribution of each step to the ultimate outcome presents a significant challenge that can be both costly and labor-intensive. Furthermore, the necessity for agents to interact with the dynamically changing environment makes the situation even more complicated. 
Sampling reward distributions based on MCTS~\cite{MCTS12ieee} requires the agent to interact with the environment until obtaining the final reward which is non-parallelizable and inefficient. 




Considering the aforementioned concerns, we aim to efficiently construct step-wise reward supervision without additional human annotation to address the ability gap between the LLM agent and the expert. 
We take inspiration from Benner's novice-to-expert theory~\cite{benner1982novice,benner1984novice}—novices can gradually align with expert policy through repeatedly observing expert behaviors with autonomous practicing and reflection of their current policy~\cite{benner1982novice,benner1984novice}. 
Intriguingly, even lacking explicit process-supervised reward signals, novices can still progressively approximate expert policy and respond swiftly to external stimuli. 
This cognitive proficiency mirrors the challenge of adapting step-wise reinforcement learning in agent interaction tasks—lacking step-wise supervision and flexibility. 


Drawing on the above motivations, we propose a \textbf{step}-wise LLM \textbf{Agent} learning framework (\textbf{\ours{}}), which emulates the novice-to-expert learning process by automatically constructing supervision signals for step-wise reinforcement learning, thereby approaching the expert policy. 
We delineate the novice-to-expert process into two distinct steps in the context of agent tasks, including inspection and reflection. 
Specifically, for the \textbf{inspection} stage, our target is to recognize the policy distinction between the agent and the expert. We begin by observing expert behavior patterns and then force the agent to practice independently at each step. This facilitates a deeper and fine-grained comprehension of the expert's decision-making processes, spontaneously providing step-wise reward feedback. 
Next, we devise a \textbf{reflection} module to effectively adjust and improve agent policy based on the practice results. We devise two strategies for agent reflection, including implicit-reward reinforcement learning and inverse reinforcement learning. 
To validate the effectiveness of our model, we conduct extensive experiments on three different scenarios of agent interactive tasks. Experimental results consistently demonstrate that our model \ours{} outperforms the state-of-the-art LLM agent models. This clearly indicates the superiority of applying step-wise reward reinforcement learning to LLM agent policy learning. 

Our main contributions are three-fold:

(1) We propose a step-wise reinforcement learning framework \ours{} that automatically constructs intermediate feedback to progressively and efficiently optimize the agent policy to eventually align with the expert policy. 

(2) We introduce two stages encompassing inspection and reflection, and construct process-supervised training data without human annotation to facilitate the novices becoming experts. 

(3) We devise two reflection strategies for step-wise optimization, including implicit-reward and inverse reinforcement learning. 

\section{Related Work}

\subsection{LLMs as Agent}
Recently, the outstanding capabilities of large language models (LLMs) have led researchers to explore adopting these models as agent core controllers and constructing artificial intelligence (AI) agents. The development of existing agent systems can be roughly divided into two primary categories: prompt-based methods and fine-tuning-based methods. 


\textbf{Prompt-based Methods.}
Prompt-based methods~\cite{AutoGPT,BabyAGI} focused on carefully designing the prompt and directly utilizing closed-source large language models, such as ChatGPT~\cite{openai2024gpt3.5technicalreport} or GPT-4~\cite{openai2024gpt4technicalreport}, for task planning and reasoning. Chain-of-Thought (CoT) prompting~\cite{COT22nips} was the fundamental of most prompt-based methods which introduced intermediate reasoning steps in demonstrations to enhance the capacity to do sophisticated reasoning. Inherit the spirit of CoT prompting, ReAct~\cite{ReAct23iclr} devised a think-and-act format prompt to inspire LLMs to generate both reasoning traces and task-specific actions in an interleaved manner. ToT~\cite{ToT24nips} further generalized to tree-structure ensuring to explore various reasoning paths and make global decisions by looking ahead or backtracking when necessary. 
Driven by human revision behavior, SELF-REFINE~\cite{selfrefine24nips} utilized a single LLM as the generator, refiner, and feedback provider. 
In addition, Reflexion~\cite{reflexion24nips} leveraged linguistic feedback maintained in a memory buffer to reinforce agents and induce better decision-making. 

\textbf{Fine-tuning-based Methods.}
Although prompt-based methods could achieve promising performances without training, they heavily rely on well-designed prompts and advanced closed-source models (\eg, ChatGPT and GPT-4) leading to high usage costs. To address these challenges, recent studies~\cite{Fireact23arxiv,AgentTuning23arxiv,AgentFLAN24arxiv,AgentLumos24arxiv} constructed expert trajectory data with teacher agents (\eg, GPT-4 or humans) and performed supervised fine-tuning on open-source LLMs (\eg, LLaMA~\cite{llama23arxiv} and Mistral~\cite{mistral23arxiv}). Taking a step further, NAT~\cite{NAT24arxiv} and ETO~\cite{ETO24acl} introduced negative samples during the fine-tuning to reduce model hallucinations and enhance robustness. Furthermore, Rejection sampling Fine-Tuning (RFT)~\cite{RFT23arxiv} collected correct reasoning paths generated by the supervised model to enrich fine-tuning datasets while SPIN~\cite{SPIN24icml} empowered a weak AI agent leveraging its generated data for training without additional human annotation. 

In this paper, we focus on fine-tuning LLMs with reinforcement learning and devise a step-wise learning strategy to align the capabilities of the agent with the expert.


\subsection{Reinforcement Learning for LLMs}
With the development of the LLMs, reinforcement learning (RL)~\cite{RLHF17nips, RLHF19arxiv} plays a vital role in improving the capabilities of LLMs. 
Actor-Critic~\cite{actor-critic99nips} was the basis of many advanced RL algorithms which leveraged the actor policy network to interact with the environment and perform policy updates under the guidance of the critic value function. Based on the actor-critic algorithm, Trust Region Policy Optimization (TRPO)~\cite{TRPO15icml} introduced trust region to ensure monotonic performance of policy learning while Proximal Policy Optimization (PPO)~\cite{PPO17arxiv} further proposed penalty and clip strategies to simplify the algorithm implementation. 
To solve the problem of instability during Reinforcement Learning from Human Feedback (RLHF) training, Direct Preference Optimization (DPO)~\cite{DPO23nips} adopted a simple classification loss to fine-tuning LLMs and achieve higher efficiency and better performances. Since the reward signal is uncertain or sparse in real-world scenarios, researchers proposed behavior cloning (BC)~\cite{BC08icml} to imitate the behaviors of experts. Furthermore, Generative Adversarial Imitation Learning (GAIL)~\cite{GAIL16nips} devised an iterative reward function learning strategy forcing the agent to fit the expert data distribution. 

The reward function in previous agent approaches was either manually annotated~\cite{RLHF17nips,RLHF22arxiv,RLHF20nips} or limited to the final reward feedback from the environment~\cite{WebShop22nips,ALFWorld21iclr,ScienceWorld22emnlp}. In this work, we propose a step-wise reinforcement learning method and automatically generate rewards for each step.

\section{Preliminaries}

In this section, we first formulate the agent task and then review supervised fine-tuning for LLMs, a crucial step before reinforcement learning that prepares the model for specific tasks. 

\subsection{Problem Formulation}

The process of an agent interacting with the environment for task solving can be formalized as a partially observable Markov decision process (POMDP) with the state set $\mathcal{S}$, action set $\mathcal{A}$, observation set $\mathcal{O}$, transition function $\mathcal{F}: \mathcal{S}\times\mathcal{A}\rightarrow\mathcal{S}$, and reward function $\mathcal{R}: \mathcal{S}\times \mathcal{A}\rightarrow [0,1]$. Initially, the environment provides a general task instruction \ttt{Prompt}$_\text{sys}$ as the system prompt, along with the agent's initial observation $o_1\in\gO$ as the specific task input, and the agent needs to interact with the environment multiple times for completing the task and generating responses. 

Specifically, at the time step $t$, the large language model agent parameterized by $\theta$ receives an observation $o_t\in\gO$ from the environment and decides to take an action $a_t\in\gA$ according to the policy $\pi_\theta(\cdot|s_t)$, where $s_t=($\ttt{Prompt}$_\text{sys},o_1, a_1, \cdots, a_{t-1}, o_{t})\in\gS$ is the current state of the environment. The interaction process repeats until the task completes or exceeds the maximum steps. A reward $r\in[0,1]$ is then computed for the final trajectory $($\ttt{Prompt}$_\text{sys}, o_1, a_1, \cdots, o_{n}, a_n)$, where $r=1$ indicates the task is success and $0$ means failure.\footnote{We omit \ttt{Prompt}$_\text{sys}$ for simplification in the following expressions.} The conditional probability distribution for the overall process $\pi_\theta(a_n|o_1)$ can be denoted through a decomposition as follows:
\begin{align}
    \pi_\theta(a_n|o_1)=\prod_{t=1}^n \pi_\theta(a_t|s_t). \label{eq:MDP}
\end{align}


\subsection{Supervised Fine-tuning}
Supervised fine-tuning (SFT) entails leveraging relatively smaller labeled expert data to better adapt the pre-trained LLMs to specific domains or downstream tasks~\cite{SFT22nips,SFT23emnlp}, providing a solid foundation for creating a powerful agent. 

Given an expert interaction trajectory $t_e=(\hat{o}_1, \hat{a}_1, \cdots, \hat{o}_{n}, \hat{a}_n)$ in the expert trajectory set $\gT$, we leverage the auto-regressive loss to fine-tune the initial LLM and obtain the base agent $\pi_{\theta_0}$ as follows:
\begin{align}
    L_\text{SFT}=-\mathbb{E}_{t_e\sim\gT}[\pi_\theta(\hat{a}_n|\hat{o}_1)]. \label{eq:SFT}
\end{align}
Following Equation~(\ref{eq:MDP}), $\pi_\theta(\hat{a}_n|\hat{o}_1)=\prod_{t=1}^n \pi_\theta(\hat{a}_t|\hat{s}_t)$, where $\hat{s}_t=(\hat{o}_1, \hat{a}_1, ..., \hat{o}_{t})$. We first concatenate the instruction prompt, actions and observations in trajectory $t_e$ as a token sequence $w=(w_1, ..., w_l)$ with length $l$. Then, the probability $\pi_\theta(\hat{a}_n|\hat{o}_1)$ in Equation~(\ref{eq:SFT}) can be formulated as follows:
\begin{align}
    \pi_\theta(\hat{a}_n|\hat{o}_1)=-\sum_k \log\pi_\theta(w_k|w_{<k}) \times \mathbf{1}_{w_k\in \gA},
\end{align}
where $w_{<k}$ indicates tokens before the $k$-th token and $\mathbf{1}_{w_k\in \gA}$ is an indicator function indicating whether $w_k$ is a token of actions generated by the agent. We mask the observation tokens and compute the probability solely for the action tokens. 

\section{From Novice to Expert}
\label{sec:method}
Large language model (LLM) agents have demonstrated superior capabilities in tackling complex interactive tasks, by leveraging reinforcement learning strategy to align the agent policy with human preferences. However, existing research on LLM agents~\cite{ETO24acl,SPIN24icml} encounter significant challenges stemming from reward signal sparsity and the complexities associated with reasoning process.
To address these limitations, in this section, we introduce a step-wise reinforcement learning framework to optimize the agent policy without manually annotating the procedural rewards. 
Our approach is inspired by the principles of Benner's novice to expert~\cite{benner1982novice,benner1984novice}, facilitating progressively self-iterative experience acquisition. By constantly monitoring the expert's behaviors and practice spontaneously, the LLM agent can accumulate experience and eventually advance from novice to expert proficiency.



The overall framework of \ours{} is depicted in Figure ~\ref{fig:model}. \ours{} comprises two major phases: (1) \textbf{Inspection} and (2) \textbf{Reflection}. The details of the two stages are introduced in the following sections. 

\begin{figure*}
    \centering
    \includegraphics[width=\textwidth]{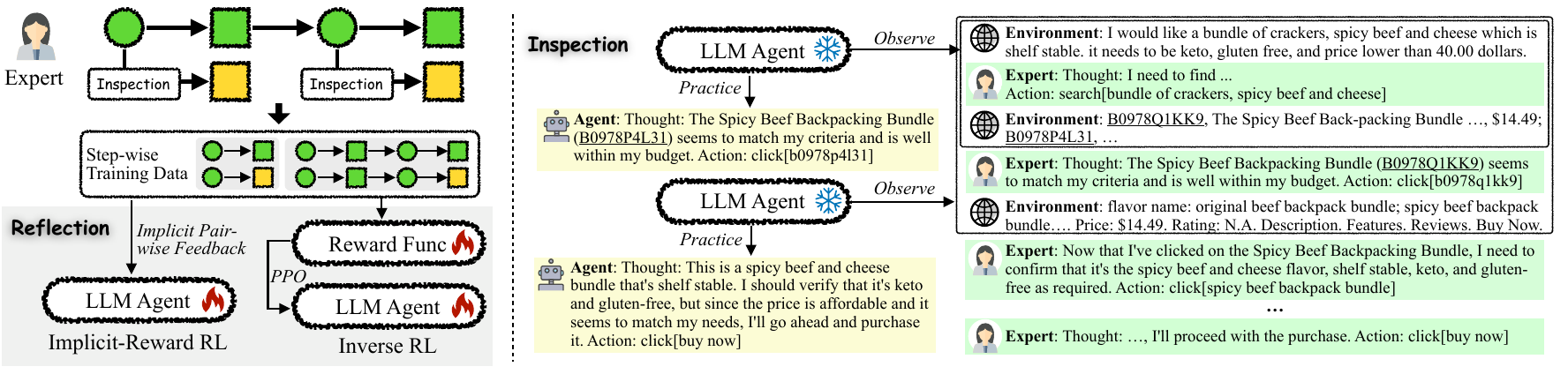}
    \caption{The architecture of our proposed framework \ours{} containing two stages: inspection and reflection. Blue snowfake indicates frozen parameters while red flame means trainable parameters. The example comes from the WebShop dataset.}
    \label{fig:model}
\end{figure*}


\subsection{Inspection: Recognizing Capability Gaps}
\label{subsec:inspection}



Inspection, in accordance with Benner's novice to expert theory~\cite{benner1982novice,benner1984novice}, involves the novice initially observing expert behaviors and attempting to replicate these behaviors independently under the same circumstance. This comparative practice aims to recognize the capability gap between the novice and the expert, thereby facilitating subsequently novice policy improvements. 
Previous methods for constructing LLM agents~\cite{ETO24acl,SPIN24icml} focus on observing and imitating the complete behavior trajectory of the expert with the final environmental reward feedback for optimization. 
However, due to the complexity of the agent tasks, LLM agents need to constantly interact with the environment and engage in trial-and-error to arrive at the ultimate reasoning outcome. The inherent multi-step reasoning characteristics of agent tasks bring dual challenges of efficiency and effectiveness for the novice's self-attempts of the complete trajectory. First, emulating the full trajectory of the expert and acquiring the final environmental feedback require the agent to constantly interact with the environment. This interaction is sequential and cannot be parallelized, resulting in the significant consumption of computational time and resources. Besides, the necessity for the novice to comprehend every expert action simultaneously can lead to information overload. This overload complicates the novice to digest and master the specifics of each behavior, often resulting in inefficient learning processes. Consequently, novices may require additional training data or iterations to fully grasp the insights derived from the expert's experiences.

To address these limitations, it is essential for the novice to attentively observe and imitate the expert's actions step-by-step. This enables the novice to identify shortcomings in their behaviors and facilitate the mastery of critical skills. 
Specifically, considering an expert trajectory $t_e=(\hat{o}_1$, $\hat{a}_1$, $\cdots$, $\hat{o}_{n}, \hat{a}_n)$ with $n$-steps, we segment this trajectory after each action, treating each action as a short-term learning objective for the novice:
\begin{align}
    (\hat{o}_1, \hat{a}_1, \cdots, \hat{o}_{i}, \hat{a}_i)\in\gT_\text{sample},\quad i=1, 2, \cdots, n.
\end{align}


When the novice establishes learning targets, it triggers the practice stage in the expert-novice learning process. This spontaneous exercise is geared towards identifying the behavioral discrepancies between the novice agent and the expert, allowing for the accumulation of experience and the gradual development of the novice's behavioral patterns through repeated practice.
Central to this spontaneous exercise is that the novice generates actions based on the previously established learning targets. 
Specifically, for each learning objective in $\gT_\text{sample}$, we treat the state $\hat{s}_i=(\hat{o}_1, \hat{a}_1, \cdots, \hat{o}_{i})$ as the prompt and let the agent $\pi_{\theta}$ parameterized by $\theta$ to generate the appropriate action as Equation~(\ref{eq:atheta}) and obtain the corresponding agent trajectory $(\hat{o}_1, \hat{a}_1, \cdots, \hat{o}_{i}, a^{\theta}_i)\in\gT^\theta_\text{sample}$.
\begin{align}
    \label{eq:atheta}
    a^{\theta}_i \sim \pi_\theta(a|s).
\end{align}




\subsection{Reflection: Strategizing Policy Refinement}
In novice-to-expert theory, progression toward expert-level performance requires novices to reflect on their interaction trajectories. This introspection is intended to summarize and internalize experiences, ultimately leading to the development of individualized behavior patterns and policies. Therefore, in this section, we leverage interactions constructed in Section~\ref{subsec:inspection} and devise two distinct reflection strategies, including implicit-reward reinforcement learning and inverse reinforcement learning. 



\subsubsection{Implicit-Reward Reinforcement Learning}
\label{subsubsec:implicit}
We begin by directly comparing the actions of the expert and the novice agent without introducing explicit reward estimation. Given a trajectory pair $(t_\text{sample}, t_{\theta})$ where $t_\text{sample}=(\hat{o}_1$, $\hat{a}_1, \cdots, \hat{o}_{i}, \hat{a}_i)$ is the expert trajectory while $t_{\theta}=(\hat{o}_1$, $\hat{a}_1, \cdots, \hat{o}_{i}, a^{\theta}_i)$ is the corresponding agent trajectory. Inheriting the spirit of previous works~\cite{ETO24acl,SPIN24icml}, we utilize the direct preference optimization loss~\cite{DPO23nips}, defined as follows:
\begin{equation}
    \begin{aligned}
    L_\text{implicit}(\pi_\theta, \pi_\text{ref}) = -\mathbb{E}[\log\sigma(\beta\log\frac{\pi_\theta(\hat{a}_i|\hat{s}_i)}{\pi_\text{ref}(\hat{a}_i|\hat{s}_i)}-\beta\log\frac{\pi_\theta(a^{\theta}_i|\hat{s}_i)}{\pi_\text{ref}(a^{\theta}_i|\hat{s}_i)})],
    \label{eq:dpo}
\end{aligned}
\end{equation}
where $\pi_\theta$ is the current agent policy needed to be optimized, $\pi_\text{ref}$ is the reference model initialized with the agent policy and $\beta$ is a hyper-parameter.

\subsubsection{Inverse Reinforcement Learning}
\label{subsubsec:irl}
Considering the lack of reward signals for each reasoning step in existing datasets, we introduce an inverse reinforcement learning (IRL) method~\cite{IRL00icml,IRL98,IRL21AI,GAIL16nips}. This method first infers the step-wise reward function based on the expert's and agent's behaviors and then leverages the reward function to fine-grained optimizes the agent policy.

We first define the occupancy measure $\rho_\pi$ for a policy $\pi$, indicating the normalized distribution of state-action pairs when the agent adopts policy $\pi$ to explore the environment:
\begin{align}
    \label{rhopi}
    \rho_\pi(s,a)=(1-\gamma)\sum_{t=0}^\infty \gamma^t P_\pi(s_t=s) \pi(a|s),
\end{align}
where $1-\gamma$ is the normalization factor, $P_\pi(s_t=s)$ represents the probability of the agent in state $s$ at time $t$ when adopting policy $\pi$. 

To accurately imitate the expert policy, it is essential to ensure that the policy distribution generated by the agent is as similar as possible to that generated by the expert. This can be achieved by maintaining that the agent's occupancy measure $\rho_{\pi_\theta}$ is as close as possible to that of the expert $\rho_{\pi_e}$. We adopt Jensen-Shannon divergence (JS) to measure the distance between two distributions. 
\begin{align}
    \min_\pi \text{JS}(\rho_{\pi_\theta}, \rho_{\pi_e})-\lambda H(\pi_\theta),
    \label{eq:js}
\end{align}
where $\lambda$ is the hyper-parameter, $H(\pi_\theta)\overset{\triangle}{=}\mathbb{E}_{\pi_\theta}[-\log \pi_\theta(a|s)]$ is the $\gamma$-discounted causal entropy~\cite{Hpi14ieee} of the agent policy.  

Following GAIL\cite{GAIL16nips}, the Jensen-Shannon divergence $\text{JS}(\rho_{\pi_\theta}, \rho_{\pi_e})$ and be represented by a convex cost function regularizer $\varpi(\rho_{\pi_\theta}-\rho_{\pi_e})$, up to a constant shift and scaling. The definition of the convex cost function regularizer $\varpi: \mathbb{R}^{\gS\times\gA}\to\mathbb{R}\cup\{\infty\}$ is defined as:
\begin{align}
    \varpi(c)\triangleq\left\{\begin{array}{ll}
       \mathbb{E}_{\pi_e}[-c(s,a)-\log(1-e^{c(s,a)})]  &  c<0; \\
        +\infty & c\geq 0.
    \end{array} \right.
\end{align}

According to \cite{GAIL16nips}, the optimal solution of the above regularizer $\varpi(\rho_{\pi_\theta}-\rho_{\pi_e})$ is denoted as follows:
\begin{align}
    \sup_{D\in(0,1)^{\mathcal{S}\times\mathcal{A}}} \mathbb{E}_{\pi_\theta}[\log(D(s,a))]
    +\mathbb{E}_{\pi_e}[\log(1-D(s,a))]. \notag
\end{align}

Therefore, the optimization problem of Equation~(\ref{eq:js}) can be transformed into finding a saddle point $(\pi, D)$ of the below Equation:
\begin{align}
    \mathbb{E}_{\pi_\theta}[\log(D(s,a))]+\mathbb{E}_{\pi_e}[\log(1-D(s,a)]-\lambda H({\pi_\theta}).
    \label{eq:discriminator}
\end{align}

We directly train a discriminator network $D: \gS\times\gA\rightarrow(0,1)$, utilizing data sampled from the expert and agent trajectories. The primary objective of $D$ is to differentiate between the distribution of data generated by the agent policy $\pi_\theta$ and the expert policy $\pi_e$. When $D$ cannot distinguish data generated by the agent from the expert, then the occupancy measure of the agent $\rho_\pi$ has successfully matched that of the expert $\rho_{\pi_e}$. 
The discriminator network $D$ can be interpreted as an implicit reward model providing step-wise learning signals to the agent policy. The complete learning process of \ours{}-inverse is introduced in Algorithm~\ref{Alg-IRL}. 

\begin{algorithm}[!t]
    \caption{\ours{} with Inverse Reinforcement Learning}\label{Alg-IRL}
    \begin{algorithmic}[1]
    \State \textbf{Input:} Expert trajectories $(\hat{o}_1, \hat{a}_1, ..., \hat{o}_{n-1}, \hat{a}_{n})\in\gT$, \\
    \qquad\quad agent policy initialized by $\pi_{\theta_0}$
    \State \textbf{Output:} Final agent policy $\pi_\theta$
    \State Initialize $\pi_{\theta_1}\leftarrow\pi_{\theta_0}$
    \For{iteration $i=1,2, ...$}
    \State \textcolor{grey}{// Inspection Stage.}
    \State For each sampled step-wise expert trajectory $(\hat{o}_1, \hat{a}_1, \cdots, $ $\hat{o}_{t}, \hat{a}_t)\in\gT_\text{sample}$ generate the corresponding agent trajectory $(\hat{o}_1, \hat{a}_1, \cdots, \hat{o}_t, a^{\theta}_t)\in\gT^\theta_\text{sample}$ with policy $\pi_{\theta_i}$
    \State \textcolor{grey}{// Reflection Stage.} 
    \For{data in $(\gT_\text{sample}, \gT^\theta_\text{sample})$}
        \State train the discriminator with the following loss:
        \begin{equation}
        \small
        \begin{aligned}
            \mathbb{E}_{\pi_\theta}[\log(D_w(s,a))]+\mathbb{E}_{\pi_e}[\log(1-D_w(s,a)]
        \end{aligned}
        \end{equation}
        \State Update the parameter of the discriminator $D_w\rightarrow D_{w'}$ 
        \State Take a policy step with PPO rule and reward function $\log(D_{w'}(s,a))$ and update policy $\pi_{\theta_i}\rightarrow\pi_{{\theta_i}'}$.
    \EndFor
    \EndFor
    \end{algorithmic}
\end{algorithm}


\section{Theoretical Analysis}
In this section, we provide a theoretical analysis to prove that the distribution of actions generated by the agent can converge toward the expert action distribution over multiple training cycles. 

\begin{assumption}
    \textit{The loss function of Equation~(\ref{eq:dpo}) and (\ref{eq:discriminator}) is bounded and Lipschitz continuous.} 
    \label{assumption}
\end{assumption}
Since our policy update method employs gradient descent, under Assumption~\ref{assumption}, the policy \(\pi_\theta\) will converge to a local minimum as the iterations increase. The following analyses are conducted under Assumption~\ref{assumption}. 

\begin{proposition}
    \textit{The occupancy measure \(\rho_{\pi_\theta}\) for the agent policy can converge to closely approximate the expert's occupancy measure \(\rho_{\pi_e}\), after several iterations. }
    \label{proposition:rho}
\end{proposition}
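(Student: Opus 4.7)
The plan is to combine the GAIL-style equivalence already derived in Equations~(\ref{eq:js})--(\ref{eq:discriminator}) with a convergence argument based on Assumption~\ref{assumption} and the alternating updates in Algorithm~\ref{Alg-IRL}. First, I would recall that Equation~(\ref{eq:discriminator}) is precisely the min-max reformulation of the regularized objective $\text{JS}(\rho_{\pi_\theta},\rho_{\pi_e}) - \lambda H(\pi_\theta)$ introduced in Equation~(\ref{eq:js}), and that by the GAIL analysis the inner supremum over $D$ is attained at $D^{\star}(s,a) = \rho_{\pi_\theta}(s,a)/\bigl(\rho_{\pi_\theta}(s,a)+\rho_{\pi_e}(s,a)\bigr)$. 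Substituting $D^{\star}$ back makes the outer objective equal, up to an additive constant, to $2\,\text{JS}(\rho_{\pi_\theta},\rho_{\pi_e}) - \lambda H(\pi_\theta)$. Hence any saddle point of (\ref{eq:discriminator}) corresponds to a policy whose occupancy measure matches $\rho_{\pi_e}$ modulo the entropy regularizer.

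Second, I would argue that the alternating scheme in Algorithm~\ref{Alg-IRL} drives the iterates toward such a saddle point. The discriminator update in line~9 is a gradient step on a bounded, Lipschitz objective in $w$ and therefore, by Assumption~\ref{assumption}, the sequence $\{D_{w^{(i)}}\}$ converges to a stationary point approximating $D^{\star}$ for the current $\pi_{\theta_i}$. The subsequent PPO step in line~10 uses $\log D_{w'}$ as a per-step reward; by the standard monotonic-improvement property of PPO's clipped surrogate, this step does not increase the policy-side objective. Chaining the two monotone-improvement facts yields that the composite quantity $\Phi(\theta_i) \triangleq \text{JS}(\rho_{\pi_{\theta_i}},\rho_{\pi_e}) - \lambda H(\pi_{\theta_i})$ is non-increasing across outer iterations. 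Since $\Phi$ is bounded below (JS is non-negative and $H$ is bounded on a finite action vocabulary), the sequence $\Phi(\theta_i)$ converges, and the gradient norms of both updates vanish in the limit.

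Third, I would translate this limit into the claimed convergence of occupancy measures. At any limit point $\theta^{\star}$, the policy-side gradient of $\Phi$ is zero, so $\pi_{\theta^{\star}}$ is a local minimizer of $\Phi$. Because Pinsker-type inequalities bound the total-variation distance $\lVert\rho_{\pi_\theta}-\rho_{\pi_e}\rVert_{TV}$ by a function of $\text{JS}(\rho_{\pi_\theta},\rho_{\pi_e})$, controlling $\Phi$ controls the distance between occupancy measures up to an $O(\lambda)$ bias induced by the entropy term; taking $\lambda$ small yields $\rho_{\pi_{\theta^{\star}}}\approx\rho_{\pi_e}$ as stated.

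The main obstacle I anticipate is the non-convexity of the policy objective in $\theta$: Assumption~\ref{assumption} only guarantees convergence to a \emph{local} minimum of the saddle-point game, not to the global optimum, so the statement is genuinely a local-convergence claim. A secondary difficulty is justifying that the discriminator is updated sufficiently close to $D^{\star}$ before each policy step; I would handle this by appealing to a two-timescale argument (discriminator learning rate larger than the policy learning rate), which is standard in GAIL-style analyses and compatible with Assumption~\ref{assumption}.
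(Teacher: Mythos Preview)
Your proposal is sound for the inverse-RL branch but takes a substantially different---and much more laborious---route than the paper. The paper's proof of this proposition is essentially a one-line reduction: it observes that the discrepancy between $\rho_{\pi_\theta}$ and $\rho_{\pi_e}$ can be measured by a KL or JS divergence, and therefore restates the claim as Proposition~\ref{proposition:main} (``optimizing the loss is equivalent to minimizing that divergence''), deferring all work there. None of the saddle-point machinery you develop---the optimal discriminator $D^\star=\rho_{\pi_\theta}/(\rho_{\pi_\theta}+\rho_{\pi_e})$, the monotone Lyapunov argument for $\Phi$, PPO's monotonic improvement, the two-timescale justification, the Pinsker bound---appears in the paper; it simply invokes Assumption~\ref{assumption} to assert that gradient descent on a bounded Lipschitz loss converges to a local minimum, and then argues separately that the loss itself encodes the relevant divergence. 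Your treatment of the alternating updates in Algorithm~\ref{Alg-IRL} is considerably more careful than anything the paper actually proves.

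There is, however, a coverage gap. The paper intends Proposition~\ref{proposition:rho} (via Proposition~\ref{proposition:main}) to cover \emph{both} reflection mechanisms of Section~\ref{sec:method}: the implicit-reward DPO loss in Equation~(\ref{eq:dpo}) and the inverse-RL objective in Equation~(\ref{eq:discriminator}). Your plan is tailored exclusively to the latter. For the implicit-reward case the paper's STEP~1 invokes the DPO reparameterization $r(s,a)=\beta\log\bigl(\pi_\theta(a|s)/\pi_{\text{ref}}(a|s)\bigr)+\beta\log Z(s)$ together with the Bradley--Terry model to identify Equation~(\ref{eq:dpo}) with a KL minimization; nothing in your proposal addresses that branch. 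If you want to match the scope of the paper's claim, you need a parallel argument for the DPO variant---though for the inverse-RL variant your argument is already stronger than what the paper supplies.
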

\begin{proof}
    The occupancy measure represents the normalized distribution of state-action pairs. Consequently, the discrepancy between \(\rho_{\pi_\theta}\) and \(\rho_{\pi_e}\) can be measured using the Kullback-Leibler or Jensen-Shannon divergence
    \(
    \text{KL/JS}(\rho_{\pi_\theta},\rho_{\pi_e})
    \). In this context, Proposition~\ref{proposition:rho} can be reformulated into Proposition~\ref{proposition:main}. 
\end{proof}

\begin{proposition}
    \label{proposition:main}
    \textit{Proving that optimizing the loss function can be ultimately equivalent to the minimized KL/JS divergence. }
\end{proposition}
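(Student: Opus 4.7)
The plan is to split the proof according to the two reflection strategies, handling the inverse reinforcement learning loss of Equation~(\ref{eq:discriminator}) and the implicit-reward loss of Equation~(\ref{eq:dpo}) as separate cases, since each traces a different path from a concrete loss function to a divergence between occupancy measures. In both cases the goal is to show that a stationary point of the loss (whose existence is guaranteed by Assumption~\ref{assumption}) forces $\rho_{\pi_\theta}$ to coincide with $\rho_{\pi_e}$, which by Proposition~\ref{proposition:rho} completes the claim.

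For the inverse reinforcement learning side, most of the bridge is already contained in the derivation leading up to Equation~(\ref{eq:discriminator}). I would first fix $\pi_\theta$ and show that the inner maximization over $D$ is solved by $D^\star(s,a)=\rho_{\pi_\theta}(s,a)/(\rho_{\pi_\theta}(s,a)+\rho_{\pi_e}(s,a))$; substituting this back yields, up to an additive constant and a positive scaling, the expression $2\,\text{JS}(\rho_{\pi_\theta},\rho_{\pi_e})-\log 4-\lambda H(\pi_\theta)$. The alternating updates of Algorithm~\ref{Alg-IRL} are then the standard primal-dual scheme for this saddle point, so under Assumption~\ref{assumption} they drive the outer minimization toward a $\pi_\theta$ that makes the JS divergence vanish (modulo the entropy regularizer). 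For the implicit-reward side, I would invoke the DPO bijection: identifying $r_\theta(s,a):=\beta\log\bigl(\pi_\theta(a|s)/\pi_\text{ref}(a|s)\bigr)$ turns Equation~(\ref{eq:dpo}) into a Bradley--Terry negative log-likelihood whose stationary solution has the closed form $\pi^\star(a|s)\propto\pi_\text{ref}(a|s)\exp(r^\star(s,a)/\beta)$, where the preference reward $r^\star$ ranks expert actions above agent samples at every inspected state. Iterating Inspection then saturates this preference model on $\pi_e$, so $\pi_\theta(a|\hat s_i)\to\pi_e(a|\hat s_i)$ state-by-state, and the KL-regularization term built into DPO immediately gives $\text{KL}(\pi_\theta(\cdot|\hat s_i)\,\|\,\pi_e(\cdot|\hat s_i))\to 0$.

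The main obstacle I expect is lifting the per-state DPO convergence to a statement about full occupancy measures. The DPO reduction is local, whereas $\rho_\pi$ in Equation~(\ref{rhopi}) also depends on the induced state-visitation frequencies $P_\pi(s_t=s)$. The cleanest route will be an induction over $t$: once $\pi_\theta(\cdot|\hat s_k)=\pi_e(\cdot|\hat s_k)$ for all $k\le t$, the Markov decomposition of Equation~(\ref{eq:MDP}) forces the state distribution at step $t+1$ to match the expert's, so the next inspection cycle exposes the agent to exactly the right states and the argument closes. I would also need mild regularity, namely that $\pi_\text{ref}$ is absolutely continuous with respect to $\pi_e$ on the support of the expert trajectories and that the Bradley--Terry preference model is well-specified, both of which I would record alongside Assumption~\ref{assumption}.
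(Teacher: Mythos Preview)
Your proposal is correct and follows essentially the same two-case split as the paper: the inverse-RL branch recovers the JS divergence via the optimal discriminator (the paper simply cites the GAIL reduction behind Equations~(\ref{eq:js})--(\ref{eq:discriminator}) and then invokes the policy gradient theorem with reward $\log D(s,a)$ rather than writing out $D^\star$), and the implicit-reward branch invokes the DPO reward reparameterization $r(s,a)=\beta\log\tfrac{\pi_\theta(a|s)}{\pi_\text{ref}(a|s)}+\beta\log Z(s)$ together with the Bradley--Terry model to identify Equation~(\ref{eq:dpo}) with a KL-constrained objective. Your treatment is in fact more careful than the paper's on the DPO side: the paper's STEP~1 stops at ``equivalent to minimizing the KL divergence'' without ever addressing the per-state-to-occupancy lift that you flag and for which you sketch the inductive closure via Equation~(\ref{eq:MDP}).
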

    In the remaining section, we demonstrate that this proposition is valid for both reflection mechanisms, including implicit-reward reinforcement learning and inverse reinforcement learning. 
\begin{proof}
    In the following parts, we first prove that Proposition~\ref{proposition:main} holds for implicit-reward reinforcement learning, and then prove for inverse reinforcement learning optimization. 
    
    \textbf{STEP 1.} 
    According to \citet{DPO23nips}, the optimal solution of the KL-constrained reward maximization objective can be rearranged so that the reward function can be expressed as
    \[
    r(s,a)=\beta \log \frac{\pi_\theta(a|s)}{\pi_\text{ref}(a|s)}+\beta \log Z(s),
    \]
    where \(Z(s)\) is the partition function~\cite{partition23arxiv,partitionfunc22nipz}. 
    Following Bradley-Terry model~\cite{BT22pmlr}, we have :
    \[
    p(a_1>a_2|s)=\sigma(r(s,a_1)-r(s, a_2)).
    \]
    Then, the policy objective can be formulated as Equation~(\ref{eq:dpo}) which is equivalent to minimizing the KL divergence. 
    
    \textbf{STEP 2.} Inverse reinforcement learning first trains a discriminator network, which subsequently generates scores that serve as the reward function for optimizing the policy network. Its optimization target can be denoted as:
    \[
    J(\theta) = -\mathbb{E}_{(s,a) \sim \pi_\theta} [\log D(s,a)].
    \]
    
    According to the policy gradient theorem, the gradient of \(J(\theta)\) can be expressed as:
    \[
    \nabla_\theta J(\theta) = \mathbb{E}_{(s,a) \sim \pi_\theta} \left[ \nabla_\theta \log \pi_\theta(a|s) R(s,a) \right].
    \]
    We utilize the output of the discriminator as the reward and the gradient of the policy becomes: 
    \[
    \nabla_\theta J(\theta) = \mathbb{E}_{(s,a) \sim \pi_\theta} \left[ \nabla_\theta \log \pi_\theta(a|s) D(s,a) \right].
    \]
    The expected return of the policy update (\ie, the output of the discriminator) is related to the gradient of the policy parameters. 
    
    According to Equation~(\ref{eq:js}-\ref{eq:discriminator}), optimizing the loss function of the discriminator network is equivalent to reducing the JS divergence between the two occupancy measures. 
    The application of the policy gradient theorem enables the agent to optimize its strategy using feedback from the discriminator. This process ensures the trajectory generated by the agent to gradually approach the expert's trajectory distribution by maximizing the output of the discriminator. 
    
\end{proof}

\section{Experimental Settings}
\subsection{Datasets}
To thoroughly evaluate the ability of our proposed model \ours{}, we utilize representative tasks from three aspects, including web tasks, agent tasks, and multi-hop question-answering tasks. The statistics of these datasets are delineated in Table~\ref{tab:datastat}. 

\textbf{Web tasks} consist of WebShop~\cite{WebShop22nips} for online shopping and Mind2Web~\cite{mind2web24nips} for complex tasks on various websites. Rewards in the two datasets are dense variable and range from 0 to 1. 

\textbf{Agent tasks} contain Science World~\cite{ScienceWorld22emnlp} for science experiments, and ALFWorld~\cite{ALFWorld21iclr} for embodied housework. The former contains continuous final rewards from zero to one while the latter has binary rewards demonstrating the completion of the task. For both datasets, we treat the in-distribution test sets as the validation set and the out-of-distribution unseen variations which aim to assess the generalization capabilities of agents as the test set.  

\textbf{Multi-hop question-answering tasks} include HotpotQA~\cite{HotpotQA18emnlp}, 2WikiMultihopQA~\cite{2wikimultihop20coling}, and MuSiQue~\cite{musique21tacl}. For each dataset, we leverage their associated Wikipedia articles contexts as our retrieval corpus to conduct multi-step reasoning. Considering the restrictions of experimental costs, following previous approaches~\cite{metacognitive24www,ReAct23iclr}, we utilize a subset of the entire dataset, selecting 5,000 samples for training from the training set and 500 samples each for the validation and test sets from the development set.



\begin{table}
  \caption{Statistic of datasets in our experiments. } 
  \small
  \centering
  \label{tab:datastat}
  \begin{tabular}{llrrrr}
    \toprule
    \textbf{Type}&\textbf{Dataset} & \textbf{\# Train} & \textbf{\# Dev} & \textbf{\# Test} & \textbf{\#Turns} \\
    \midrule
    \multirow{2}{*}{Web} 
    &WebShop & 1,938 & - & 200 & 4.9 \\
    &Mind2Web & 1,009& - & 912 & 7.3\\
    \midrule
    \multirow{2}{*}{Agent}
    &Science World & 1,483 & 194 & 241 & 14.4 \\
    &ALFWorld & 3,321 & 140 & 134 & 10.1 \\
    \midrule
    \multirow{3}{*}{\makecell[l]{Multihop\\QA}}
    &HotpotQA & 90,447 & 7,405 & 7,405 & 7.0 \\
    &2WikiMultihopQA & 167,454 & 12,576 & 12,576 & 8.2\\
    &MuSiQue & 19,938 & 2,417 & 2,417 & 7.8\\
  \bottomrule
\end{tabular}
\end{table}

\subsection{Backbone Models and Baselines}
We verify the effectiveness and robustness of our \ours{} on two widely-used open-source models: \textbf{Mistral-7B} (Mistral-7B-Instruct-v0.1) and \textbf{Llama-3-8B} (Meta-Llama-3-8B-Instruct). 

We compare the two variants (\ie, implicit and inverse) of our method \ours{} with several baselines including 
(1) Supervised Fine-Tuning (\textbf{SFT})~\cite{Fireact23arxiv,AgentTuning23arxiv} conducts behavioral cloning on expert trajectories, which is the base agent for \ours{} and other baselines. 
(2) Proximal Policy Optimization (\textbf{PPO})~\cite{PPO17arxiv} and Direct Preference Optimization (\textbf{DPO})~\cite{DPO23nips} are two representative reinforcement learning methods. We utilize the final task reward from the environment as the reward feedback for PPO. As for DPO, we adopt the trajectories generated by the agent as negative samples. 
(3) Rejection sampling Fine-Tuning (\textbf{RFT})~\cite{RFT23arxiv} and \textbf{SPIN}~\cite{SPIN24icml} incorporate the success trajectories of the agent to the expert trajectory dataset and trains the agent on new augmented trajectories.
(4) \textbf{NAT}~\cite{NAT24arxiv} and \textbf{ETO}~\cite{ETO24acl} introduce rejected trajectories into the training process, allowing the agent to learn from its failure experiences. 
We also compare \ours{} with Closed-Source LLMs including \textbf{GPT-3.5} (GPT-3.5-turbo-1106)~\cite{openai2024gpt3.5technicalreport} and \textbf{GPT-4} (GPT-4-0125-preview)~\cite{openai2024gpt4technicalreport}. 

\subsection{Evaluation Metrics}
To align with previous methods~\cite{ETO24acl,SPIN24icml}, we report the average results of the test set. For WebShop and Science World, we employ the final reward automatically assessed by the environment as the evaluation metric while for ALFWorld, we utilize the success rate for judgement. In terms of Mind2Web, we report macro element accuracy. Additionally, for the three multi-hop question-answering tasks, we leverage Exact Match (EM) for evaluation. 


\subsection{Implementation Details}
Consistent with existing works~\cite{ETO24acl,agentbench23arxiv}, we employ ReAct-form~\cite{ReAct23iclr} to generate the interaction trajectory, which additionally generates Chain-of-Thought (CoT) rationales~\cite{COT22nips} before each action. For each task, a one-shot in-context example is employed in the instruction prompt. The details of prompts are described in Appendix~\ref{sec:prompts}. For the three multi-hop question answering tasks, due to the lack of intermediate reasoning steps in the datasets, we employ GPT-4~\cite{openai2024gpt4technicalreport} as the expert to generate trajectories and select trajectories with the exact match score equalling one as the expert trajectories. We leverage greedy generation for our method and all baseline approaches. 
In the SFT stage, we set the learning rate as 1e-5 and the batch size as 64. we choose the cosine scheduler with a 0.03 warm up. We train the model for four epochs on all datasets. For the reflection stage, the learning rate is 5e-7 and the batch size is 16. The training epoch is set as one. We leverage the AdamW optimizer in both stages. All experiments are carried out on 8 NVIDIA A100 80G GPUs. 

\section{Results and Analysis}
\subsection{Overall Results}
\begin{table*}
  \caption{Performance comparison of all methods. Both variants can outperform all baselines based on open-sourced models.} 
  \small
  \centering
  \label{tab:performance}
  \begin{tabular}{llccccccc}
    \toprule
    \multirow{2}{*}[-0.5ex]{Backbone} & \multirow{2}{*}[-0.5ex]{Methods}& \multicolumn{2}{c}{\textbf{Web Tasks}} & \multicolumn{2}{c}{\textbf{Agent Tasks}} & \multicolumn{3}{c}{\textbf{Question-Answering Tasks}} \\
    \cmidrule(lr){3-4}\cmidrule(lr){5-6}\cmidrule(lr){7-9}
    & &WebShop & Mind2Web & Science World & ALFWorld & HotpotQA & 2WikiMultihopQA & MuSiQue \\
    \midrule
    GPT-3.5 & Base & 40.2 & 2.0 & 19.9 & 2.2 & 13.0 & 17.6 & 4.6\\
    GPT-4 & Base & 58.0 & 26.7 & 53.6 & 36.6 & \textbf{39.4} & \textbf{64.8} & 28.2\\
    \midrule
    \multirow{10}{*}{Mistral$_{\text{7B}}$} & Base & 2.7 & 17.8 & 4.2 & 0.0 & 4.2 & 10.6 & 1.4\\
    & SFT & 60.1 & 48.7 & 52.0 & 68.5 & 24.8 & 40.4 & 22.9\\
    & PPO & 60.8 & 49.5 & 53.3 & 69.1 & 25.4 & 41.5 & 23.2\\
    & DPO & 62.4 & 50.9 & 54.1 & 70.6 & 26.9 & 42.7 & 24.9\\
    & RFT & 61.5 & 49.8 & 53.2 & 69.8 & 26.0 & 42.2 & 23.5\\
    & SPIN & 63.6 & 51.7 & 55.0 & 71.4 & 27.6 & 43.1 & 25.0\\
    & NAT & 61.3 & 50.4 & 52.9 & 69.3 & 26.1 & 41.9 & 24.1\\
    & ETO & 64.1 & 52.4 & 56.5 & 72.8 & 28.2 & 43.8 & 25.4 \\
    &\ours{}-Implicit & 66.2 & 53.3 & 59.6 & 74.2 & \textbf{31.0} & \textbf{46.8} & \textbf{27.7}\\
    &\ours{}-Inverse & \textbf{66.5} & \textbf{53.6} & \textbf{59.7} & \textbf{74.9} & 30.8 & 46.6 & 27.5\\
    \midrule
    \multirow{10}{*}{Llama3$_{\text{8B}}$} & Base & 7.2 & 23.6 & 32.3 & 0.0 & 15.6 & 13.8 & 9.0\\
    & SFT & 62.6 & 50.3 & 54.5 & 67.8 & 33.0 & 47.8 & 30.6 \\
    & PPO & 63.2 & 51.0 & 55.0 & 67.9 & 33.2 & 47.6 & 30.4\\
    & DPO & 64.0 & 52.6 & 56.9 & 70.3 & 35.1 & 48.5 & 31.6\\
    & RFT & 63.6 & 50.8 & 54.7 & 68.0 & 33.5 & 47.9 & 30.8\\
    & SPIN& 65.4 & 53.9 & 60.3 & 71.9 & 34.8 & 48.9 & 31.9\\
    & NAT & 63.2 & 50.9 & 55.6 & 68.3 & 33.4 & 48.0 & 31.0\\
    & ETO & 65.7 & 54.0 & 62.5 & 73.4 & 35.2 & 49.4 & 32.3\\
    &\ours{}-Implicit & 67.2 & 55.8 & 63.6 & 75.5 & \textbf{38.1} & 51.3 & \textbf{34.4} \\
    &\ours{}-Inverse & \textbf{67.6} & \textbf{55.9} & \textbf{64.1} & \textbf{76.1} & 37.8 & \textbf{52.0} & 34.1 \\
  \bottomrule
\end{tabular}
\end{table*}

\begin{table*}
  \caption{Ablation studies with different reward types based on Llama3$_{\text{8B}}$ and inverse reinforcement learning.}
  \label{tab:ab}
  \small
  \begin{tabular}{lccccccccc}
    \toprule
    \multirow{2}{*}[-0.5ex]{Method} & \multicolumn{2}{c}{\textbf{Reward Type}} & \multicolumn{2}{c}{\textbf{Web Tasks}} & \multicolumn{2}{c}{\textbf{Agent Tasks}} & \multicolumn{3}{c}{\textbf{Question-Answering Tasks}} \\ 
    \cmidrule(r){2-3}\cmidrule(lr){4-5}\cmidrule(lr){6-7}\cmidrule(l){8-10}
    & Step & Final &WebShop & Mind2Web & Science World & ALFWorld & HotpotQA & 2WikiMultihopQA & MuSiQue \\
    \midrule
    \ours{}-inverse & $\times$ & $\checkmark$ & 65.7 & 54.0 & 62.5 & 73.4 & 35.2 & 49.4 & 32.3\\
    \ours{}-inverse & $\checkmark$ & $\times$ & 67.6 & 55.9 & 64.1 & 76.1 & 37.8 & 52.0 & 34.1\\
    \ours{}-inverse & $\checkmark$ & $\checkmark$ & \textbf{68.0} & \textbf{56.4} & \textbf{64.8} & \textbf{76.2} & \textbf{38.9} & \textbf{52.0} & \textbf{34.8}\\
  \bottomrule
\end{tabular}
\end{table*}

The overall performance of our proposed methods \ours{} and all baselines are shown in Table~\ref{tab:performance}. We can observe that:

(1) Both variants of \ours{} consistently outperform all baseline methods across three distinct task categories by a significant margin. In comparison with ETO and SPIN, which introduce the entire trajectory for training, \ours{} achieves a significant edge with improvements of the results over all tasks. This performance demonstrates the effectiveness of utilizing the step-wise reward signals to emulate the expert policy. Even without human-annotated step-wise preference data, \ours{} still can gradually align with the expert policy distribution, leading to substantial enhancements in the response quality.



(2) Inverse reinforcement learning strategy \textbf{\ours{}-Inverse} with explicit rewards demonstrates a slight performance improvement compared to implicit-reward reinforcement learning methods \textbf{\ours{}-Implicit} on most datasets. This indicates that explicit rewards can provide the model with much clearer optimization objectives, thereby facilitating more effective adjustments in behavior. Consequently, the clarity of the optimization targets enables the novice agent to more effectively approach expert-level performance.


(3) Interestingly, \ours{} has achieved more significant improvements on the three multi-hop question-answering tasks. Concretely, \ours{} can surpass the state-of-the-art model ETO by an absolute value improvement of 2.9\% on the HotpotQA dataset. Since the reasoning steps in multi-hop question-answering tasks demonstrate complex semantic relationships (\ie, parallel or hierarchical), it is challenging for the agent to effectively imitate such complex expert policy based solely on final reward signals (\eg, task success or failure or exact match with the correct answer). Introducing step-wise rewards can facilitate the agent's deeper understanding and internalization of the expert policy's underlying logic.



In the following sections, we conduct several additional experiments to investigate \ours{} in depth. 

\subsection{Ablation Studies on Reward Type}

In this section, we conduct ablation studies to analyze the influence of different reward types on our \ours{} model. We investigate our model \ours{} with three variants~\cite{rewardtype22arxiv,rewardtype23arxiv}: (1) Step-wise reward, which constructs step-wise reward by observing and imitating the expert behaviors and optimizes agent strategy with step-wise rewards, as introduced in Section~\ref{sec:method}. (2) Final reward, which utilizes the final environmental feedback as the reward for optimization. (3) We also explore the combination of the two reward types to evaluate their impact on the performance of \ours{}. Experiments are conducted based on Llama3$_{\text{8B}}$ and inverse RL and we can obtain similar conclusions with other settings. 


From the results in Table~\ref{tab:ab}, we can observe that optimizing the reinforcement learning process solely with the final environmental feedback as rewards results in performance degradation on all tasks. Concretely, eliminating step-wise reward causes the obvious drop on all tasks (\eg, WebShop: 68.0$\xrightarrow{}$67.2 and Science World: 64.8$\xrightarrow{}$63.6). This indicates that the step-wise reward can facilitate the agent's capability to align with the expert. Meanwhile, the final environmental feedback also contributes to the final results which verifies that a combination of the step-wise and the final reward supervision is beneficial. Step-wise reward supervision provides immediate feedback, boosting optimization efficiency, while the final supervision offers clear direction for the overall learning objectives. Although the combination of the two reward types can lead to better results, obtaining the final reward necessitates interaction with the environment which cannot be parallelized. Consequently, in this paper, we exclusively focus on adopting the step-wise reward to strike a balance between efficiency and effectiveness.



\subsection{Performance with Different Model Size}

To further illustrate the robustness of \ours{}, we conduct experiments with different backbone model parameter sizes. We utilize Mistral$_\text{7B}$ and Mistral$_\text{13B}$\footnote{Mistral-Nemo-Instruct-2407} for this analysis. The results are depicted in Figure~\ref{fig:model_size}. ``-Implicit'' indicates \ours{} with implicit-reward reinforcement learning strategy while ``-Inverse'' represents inverse reinforcement learning method. We abbreviate Science World and 2WikiMultihopQA as Sci-World and 2WikiQA for limited space.

First, we can observe that \ours{} demonstrates consistent and robust efficacy across models with different parameter scales. This performance stability highlights our model's adaptability to different configurations, ensuring that its reliability in achieving effective results regardless of the parameter scale employed. Second, compared with Mistral$_\text{7B}$, Mistral$_\text{13B}$ achieves superior performances. This indicates the importance of the backbone model's capability, as it significantly influences the effectiveness of post-imitated learning. The enhanced capacity of Mistral$_\text{13B}$ allows for more effective learning and adaptation, contributing to improved performances.

\begin{figure}
    \centering
    \includegraphics[width=\linewidth]{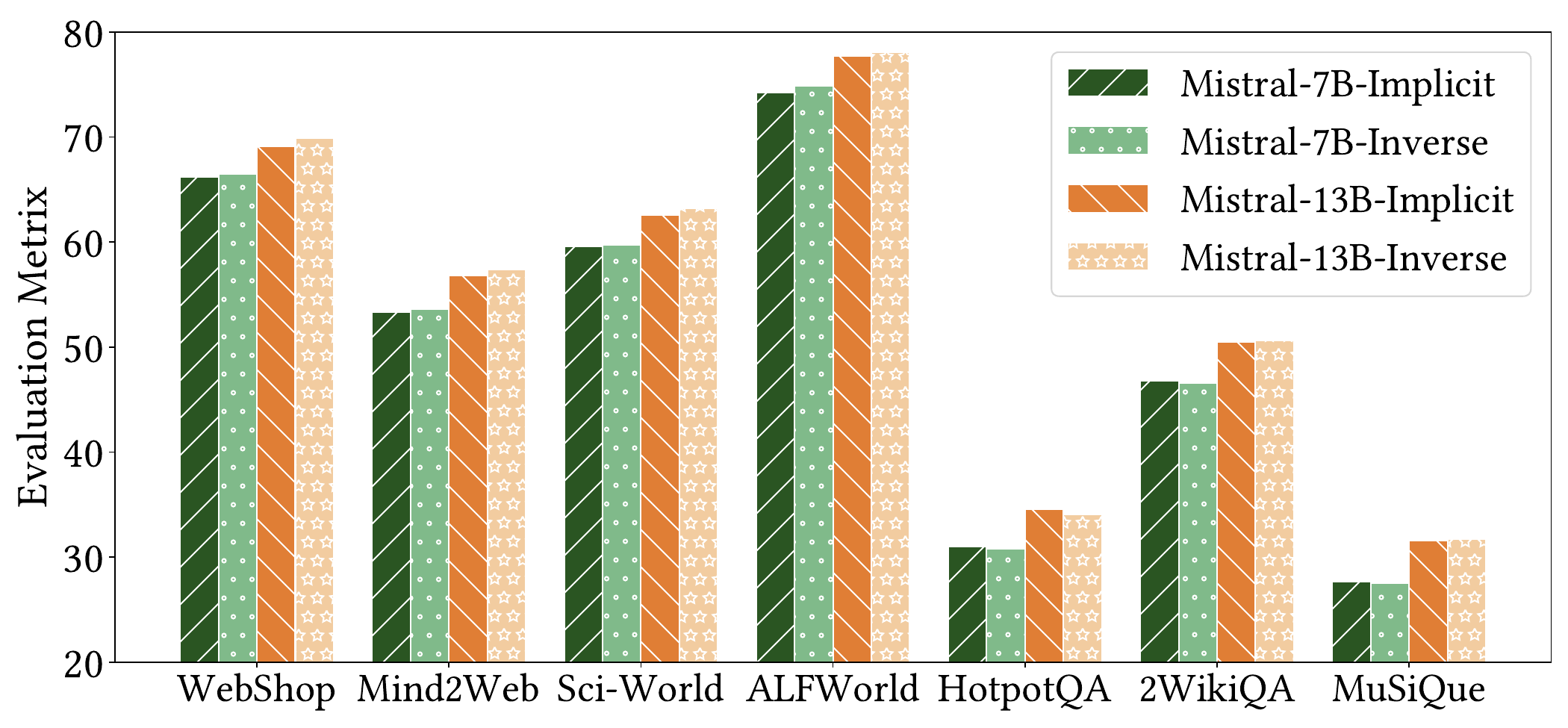}
    \caption{Performance with different backbone model parameters on all datasets. }
    \label{fig:model_size}
\end{figure}

\subsection{Exploration of Parameters Settings}
In \ours{}, two important hyper-parameters will impact the experimental performance -- the number of training iterations in Algorithm~\ref{Alg-IRL} and the practice number of the agent during the inspection stage of each iteration.
In this section, we conduct experiments to investigate their influences. We randomly selected two representative datasets WebShop and HotpotQA for this experiment and we can draw similar conclusions on other datasets. 

\textbf{Training Iteration.} 
To identify the optimal iteration number, we increase the training iteration number from one to nine, while closely monitoring the performance changes associated with the two reflection mechanisms. 
As depicted in Figure~\ref{fig:iteration}(a), the performance of \ours{} improves progressively as the training iteration number increases for both implicit-reward and inverse reinforcement learning strategies. However, the peak performance of the two methods differs. Specifically, on the WebShop dataset, the implicit-reward strategy reaches the peak after three iterations whereas the inverse reinforcement learning method achieves its best performance at the seven iteration. This indicates that more iterations are required for the model to correctly learn the explicit reward function, which leads to slower convergence. Besides, the performance starts to degrade when the iterations exceed the peak. This phenomenon can be attributed to the fact that as the agent's capabilities improve, our self-play method for generating step-wise fine-tuning data may struggle to provide contrasting positive and negative samples. The absence of clear distinctions between successful and unsuccessful behaviors disrupts the learning process. 


\begin{figure}
    \centering
    \includegraphics[width=\linewidth]{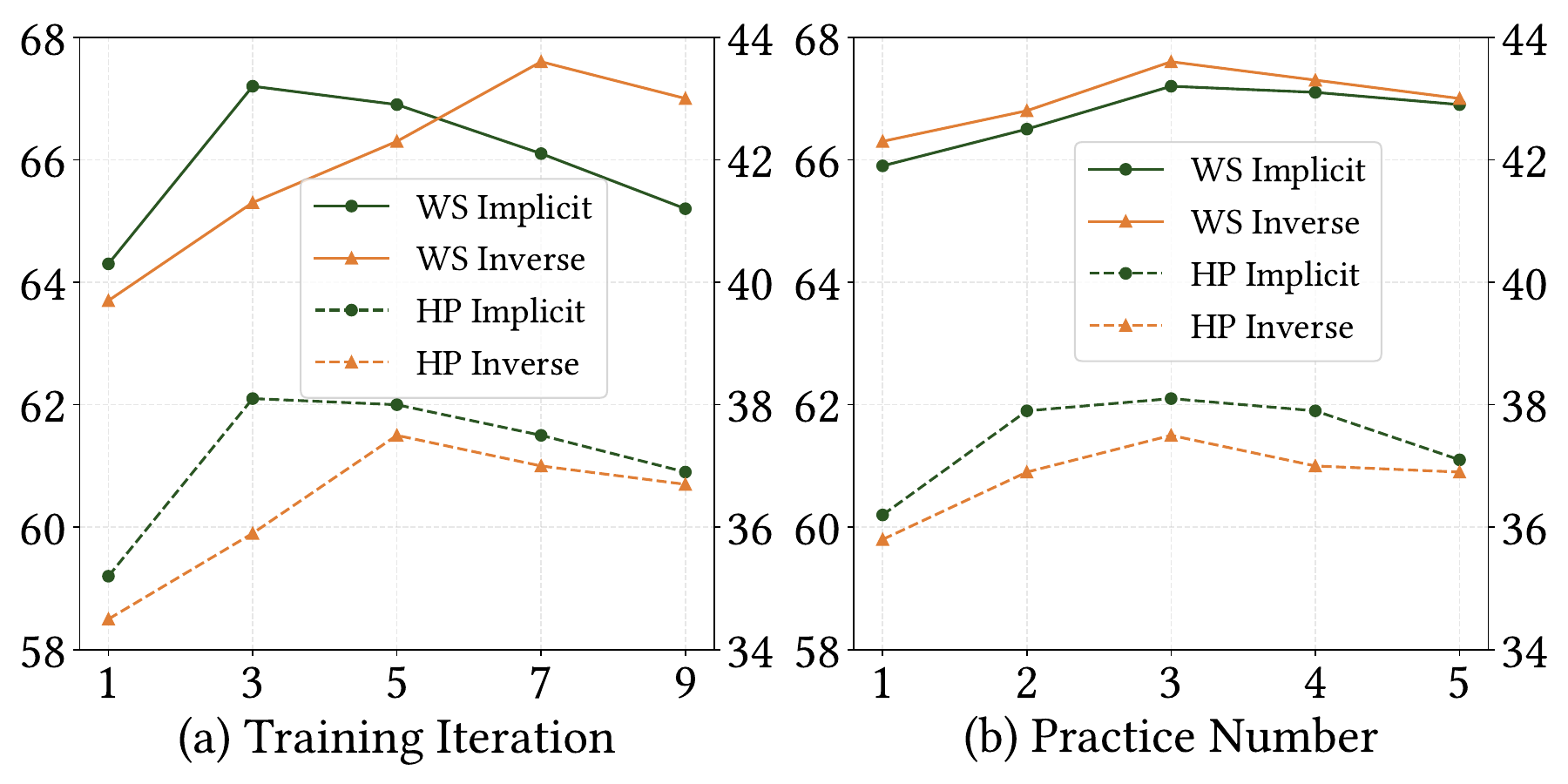}
    \caption{Performance with different training iterations and practice numbers. ``WS'' is WebShop while ``HP'' is HotpotQA. }
    \label{fig:iteration}
\end{figure}


\textbf{Practice Number.}
In this part, we conduct experiments to investigate whether introducing diverse agent trajectories is beneficial for performance improvement. To achieve this, we force the novice agent to practice multiple times for each learning objective during the inspection phase. Figure~\ref{fig:iteration}(b) shows the results of two reflection variants. We can observe that the results of both variants are gradually increasing as the practice number grows from one to three. This implies that introducing more diverse training samples can accelerate the novice's acquisition of the expert policy. However, the performance does not increase when the practice number exceeds three. A potential explanation is that, at the same cognitive level, the diversity of the samples remains limited despite multiple attempts. Consequently, incorporating more samples may lead to information redundancy, which can hinder learning efficiency and also increase computational costs.


\section{Conclusion and Future Work}
Reinforcement learning has become an effective approach for aligning agent behaviors with human preferences. However, existing reinforcement learning methods primarily adopt the final environmental feedback to optimize the agent strategy. In this paper, inspired by Benner's novice-to-expert theory, we proposed \ours{}, a step-wise reinforcement learning framework without step-wise human annotation. In the inspection stage, the novice agent first observes the behaviors of the expert and then rehearses the demonstrated actions. During the reflection stage, the agent compares its actions with those of the expert and adjusts its policy to better align with the expert's policy distribution. Experimental results across three types of tasks consistently demonstrate the superiority of \ours{} over existing baselines. Besides, we conduct additional experiments to further illustrate the effectiveness and efficiency of \ours{}. 
In the future, we aim to enhance LLM agents by integrating more advanced cognitive capabilities to better satisfy user demands and respond to dynamic environments. 


\bibliographystyle{ACM-Reference-Format}
\bibliography{sample-base}

\appendix
\section{Prompts}
\label{sec:prompts}
\subsection{WebShop}
\begin{cvbox}[Task Instruction for WebShop]
You are web shopping.
I will give you instructions about what to do.
You have to follow the instructions.
Every round I will give you an observation and a list of available actions, you have to respond an action based on the state and instruction.
You can use search action if search is available.
You can click one of the buttons in clickables.

An action should be of the following structure:

search[keywords]

click[value]

If the action is not valid, perform nothing.
Keywords in search are up to you, but the value in click must be a value in the list of available actions.
Remember that your keywords in search should be carefully designed.

Your response should use the following format:

Thought: I think ...

Action: search[something]
\end{cvbox}

\subsection{Mind2Web}
\begin{cvbox}[Task Instruction for Mind2Web]
You are a helpful assistant that is great at website design, navigation, and executing tasks for the user.

\textbf{User}: 
"<html> <div> <div> <a tock home page /> <button id=0 book a reservation. toggle open> <span> Book a reservation </span> </button> <button book a reservation. toggle open> </button> </div> <div> <select id=1 type> <option reservations true> Dine in </option> <option pickup> Pickup </option> <option delivery> Delivery </option> <option events> Events </option> <option wineries> Wineries </option> <option all> Everything </option> </select> <div id=2> <p> Celebrating and supporting leading women shaking up the industry. </p> <span> Explore now </span> </div> </div> </div> </html>" Based on the HTML webpage above, try to complete the following task: 

Task: Check for pickup restaurant available in Boston, NY on March 18, 5pm with just one guest

Previous actions: None

What should be the next action? Please select from the following choices (If the correct action is not in the page above, please select A. 'None of the above'):

A. None of the above

B. <button id=0 book a reservation. toggle open> <span> Book a

C. <select id=1 type> <option reservations true> Dine in </option> <option

D. <div id=2> <p> Celebrating and supporting leading women shaking up

\textbf{Assistant:}
Answer: C. Action: SELECT, Value: Pickup

\textbf{User}: 
"<html> <div> <main main> <section tabpanel> <div> <ul tablist> <li tab heading level 3 search and> </li> <li id=0 tab heading level 3 search and> <span> Hotel </span> </li> <li tab heading level 3 search and> </li> <li tab heading level 3 search and> </li> </ul> <div tabpanel> <div id=1> <div> <span> Dates* </span> <button button clear dates /> </div> <div> <label> Travelers </label> <div> <p> 1 Adult </p> <button button> 1 Adult </button> <div dialog> <button button travel with a pet. this> <span> Travel with a pet </span> </button> <div> <button button clear all fields> Clear all </button> <button button> </button> </div> </div> </div> </div> </div> </div> </div> </section> </main> <footer contentinfo> <div> <h3> Stay Connected </h3> <ul id=2> <a mobile tools> </a> <a open united's tiktok feed in> </a> <a open united's facebook page in> </a> <a open united's twitter feed in> </a> <a open united's youtube page in> </a> <a open united's instagram feed in> </a> <a open united's linkedin profile in> </a> </ul> </div> </footer> </div> </html>" Based on the HTML webpage above, try to complete the following task:

Task: Compare the fare types to book a 1-adult ticket from Springfiels, IL to Austin, TX for April 29th 2023

Previous actions: [combobox]  Enter your departing city, airport name, or airpor... -> TYPE: SPRINGFIELD [button]  Springfield, IL, US (SPI) -> CLICK [combobox]  Enter your destination city, airport name, or airp... -> TYPE: AUSTIN [button]  Austin, TX, US (AUS) -> CLICK

What should be the next action? Please select from the following choices (If the correct action is not in the page above, please select A. 'None of the above'):

A. None of the above

B. <li id=0 tab heading level 3 search and> <span> Hotel

C. <div id=1> <div> <span> Dates* </span> <button button clear dates

D. <ul id=2> <a mobile tools> </a> <a open united's tiktok"

\textbf{Assistant}:
Answer: A.

\textbf{User}: 
"<html> <div> <nav main menu> <ul> <li> <div button> Car Sales </div> <div id=0> <div> <div> <div> Buy A Car </div> <div> Plan Your Purchase </div> </div> <div> <h4> Its Tax Refund Time. Treat Yourself to an Upgrade. </h4> <p> With a variety of options, invest your refund in what you really want - a quality, used vehicle from Enterprise. </p> <a> View Inventory </a> </div> </div> </div> </li> <div id=1> Enterprise Fleet Management </div> </ul> </nav> <div region> <button id=2 selected pick-up date 03/19/2023> <span> <span> 19 </span> <div> <span> Mar </span> <span> 2023 </span> </div> </span> </button> </div> </div> </html>" Based on the HTML webpage above, try to complete the following task:

Task: Find a mini van at Brooklyn City from April 5th to April 8th for a 22 year old renter.

Previous actions: [searchbox]  Pick-up \& Return Location (ZIP, City or Airport) (... -> TYPE: Brooklyn [option]  Brooklyn, NY, US Select -> CLICK

What should be the next action? Please select from the following choices (If the correct action is not in the page above, please select A. 'None of the above'):

A. None of the above

B. <div id=0> <div> <div> <div> Buy A Car </div> <div>

C. <div id=1> Enterprise Fleet Management </div>

D. <button id=2 selected pick-up date 03/19/2023> <span> <span> 19 </span>

\textbf{Assistant}:
Answer: D. Action: CLICK
\end{cvbox}

\subsection{Science World}

\begin{cvbox}[Task Instruction for Science World]
You are a helpful assistant to do some scientific experiment in an environment.
In the environment, there are several rooms: kitchen, foundry, workshop, bathroom, outside, living room, bedroom, greenhouse, art studio, hallway
You should explore the environment and find the items you need to complete the experiment.
You can teleport to any room in one step.
All containers in the environment have already been opened, you can directly get items from the containers.
The available actions are:

open OBJ: open a container\\
close OBJ: close a container\\
activate OBJ: activate a device\\
deactivate OBJ: deactivate a device\\
connect OBJ to OBJ: connect electrical components\\
disconnect OBJ: disconnect electrical components\\
use OBJ [on OBJ]: use a device/item\\
look around: describe the current room\\
examine OBJ: describe an object in detail\\
look at OBJ: describe a container's contents\\
read OBJ: read a note or book\\
move OBJ to OBJ: move an object to a container\\
pick up OBJ: move an object to the inventory\\
pour OBJ into OBJ: pour a liquid into a container\\
mix OBJ: chemically mix a container\\
teleport to LOC: teleport to a specific room\\
focus on OBJ: signal intent on a task object\\
wait: task no action for 10 steps\\
wait1: task no action for a step

Your response should use the following format:

Thought: I think ...

Action: open OBJ
\end{cvbox}

\subsection{ALFWorld}
\begin{cvbox}[Task Instruction for ALFWorld]
Interact with a household to solve a task. Imagine you are an intelligent agent in a household environment and your target is to perform actions to complete the task goal. At the beginning of your interactions, you will be given the detailed description of the current environment and your goal to accomplish. 
For each of your turn, you will be given the observation of the last turn. You should first think about the current condition and plan for your future actions, and then output your action in this turn. 

The available actions are:

1. go to {recep}\\
2. task {obj} from {recep}\\
3. put {obj} in/on {recep}\\
4. open {recep}\\
5. close {recep}\\
6. toggle {obj} {recep}\\
7. clean {obj} with {recep}\\
8. heat {obj} with {recep}\\
9. cool {obj} with {recep}\\
where {obj} and {recep} correspond to objects and receptacles.
After your each turn, the environment will give you immediate feedback based on which you plan your next few steps. if the envrionment output "Nothing happened", that means the previous action is invalid and you should try more options.

Your response should use the following format:

Thought: <your thoughts>

Action: <your next action>
\end{cvbox}

\subsection{HotpotQA, 2WikimultihopQA and Musique}
\begin{cvbox}[Task Instruction for Multihop-QA Datasets]
You are an expert in this field. Please answer the question as simply and concisely as possible.
Every round I will give you an observation, you have to respond with interleaving Thought and Action steps.
Thought can reason about the current situation, and Action can be two types: 

(1) Search[entity], which searches the exact entity on Wikipedia and returns the first paragraph if it exists. If not, it will return some similar entities to search.

(2) Finish[answer], which returns the answer and finishes the task.

Your response should use the following format:

Thought: I think ...

Action: ...
\end{cvbox}

\end{document}